  \providecommand\BibTeX{{%
    \normalfont B\kern-0.5em{\scshape i\kern-0.25em b}\kern-0.8em\TeX}}}
\DeclareMathOperator{\prom}{prom}
\DeclareMathOperator{\iso}{iso}
\DeclareMathOperator{\mindesc}{mindesc}
\begin{document}
%Top matter
\let\cref\Cref

\title{Orometric Methods in Bounded Metric Data}

\author{Maximilian Stubbemann}
\affiliation{%
\institution{L3S Research Center and University of Kassel}
\city{Kassel}
\country{Germany}
}
\email{stubbemann@l3s.de}

\author{Tom Hanika}
\affiliation{%
\institution{Berlin School of Library and Information Science\\ Humboldt University of Berlin}
\city{Berlin}
\country{Germany}}
\email{tom.hanika@hu-berlin.de}

\author{Gerd Stumme}
\affiliation{%
\institution{University of Kassel and L3S Research Center}
\city{Kassel}
\country{Germany}
}
\email{stumme@cs.uni-kassel.de}

%abstract
\setcopyright{acmcopyright}
\copyrightyear{2019}
\acmYear{2019}
\acmDOI{TBA}
%\acmConference[K-Cap]{Knowledge Capture Conference}{November 19-22, 2019}{Marina del Rey, California, USA}
\acmConference[--]{---}{---}{---}

\begin{abstract}
  A large amount of data accommodated in knowledge graphs (KG) is actually
  metric. For example, the Wikidata KG contains a plenitude of metric facts
  about geographic entities like cities, chemical compounds or celestial
  objects. In this paper, we propose a novel approach that transfers orometric
  (topographic) measures to bounded metric spaces. While these methods were
  originally designed to identify relevant mountain peaks on the surface of the
  earth, we demonstrate a notion to use them for metric data sets in
  general. Notably, metric sets of items inclosed in knowledge graphs. Based on
  this we present a method for identifying outstanding items using the
  transferred valuations functions ’isolation’ and ’prominence’. Building up on
  this we imagine an item recommendation process. To demonstrate the relevance
  of the novel valuations for such processes we use item sets from the Wikidata
  knowledge graph. We then evaluate the usefulness of ’isolation’ and
  ’prominence’ empirically in a supervised machine learning setting. In
  particular, we find structurally relevant items in the geographic population
  distributions of Germany and France.
\end{abstract}

\begin{CCSXML}
<ccs2012>
<concept>
<concept_id>10002951.10003317.10003338.10010403</concept_id>
<concept_desc>Information systems~Novelty in information retrieval</concept_desc>
<concept_significance>500</concept_significance>
</concept>
<concept>
<concept_id>10002950.10003624.10003633.10010917</concept_id>
<concept_desc>Mathematics of computing~Graph algorithms</concept_desc>
<concept_significance>300</concept_significance>
</concept>
<concept>
<concept_id>10002950.10003624.10003633.10003643</concept_id>
<concept_desc>Mathematics of computing~Graphs and surfaces</concept_desc>
<concept_significance>100</concept_significance>
</concept>
<concept>
<concept_id>10010147.10010178.10010205.10010206</concept_id>
<concept_desc>Computing methodologies~Heuristic function construction</concept_desc>
<concept_significance>300</concept_significance>
</concept>
<concept>
<concept_id>10010147.10010178.10010187.10010188</concept_id>
<concept_desc>Computing methodologies~Semantic networks</concept_desc>
<concept_significance>100</concept_significance>
</concept>
</ccs2012>
\end{CCSXML}

\ccsdesc[500]{Information systems~Novelty in information retrieval}
\ccsdesc[300]{Mathematics of computing~Graph algorithms}
\ccsdesc[100]{Mathematics of computing~Graphs and surfaces}
\ccsdesc[300]{Computing methodologies~Heuristic function construction}
\ccsdesc[100]{Computing methodologies~Semantic networks}

\keywords{metric~spaces, orometric~functions, knowledge~graphs, classification}

\maketitle

\section{Introduction}

Knowledge graphs, such as DBpedia~\cite{Lehmann15} or
Wikidata~\cite{Kroetzsch14}, are the state of the art structure for
storing information and to draw knowledge from. They are knowledge
bases represented as graphs and consist essentially of \emph{items}
which are related through \emph{properties} and \emph{values}. This
enables them to fulfill the task of giving exact answers to exact
questions. However, they are limited when it comes to provide a
concise overview of the contained metric data and give characteristic
insights.  For example, the number of such metric data sets in
Wikidata is tremendous: since (presumably) the set of all cities of
the world, including their geographic coordinates, is included in
Wikidata, this constitutes a metric data set. Further examples are
chemical compounds and their physical properties like mass and size or
celestial bodies and their trajectories.

One possibility to enhance the understanding of the metric data is to
identify outstanding elements, i.e., outstanding items. Based on such
elements it is possible to compose or enhance item recommendations to
users. For example, such recommendations could provide a set of the
most relevant cities in the world with respect to being outstanding in
their local surroundings. However, it is a challenging task to
identify outstanding items in metric data sets. In cases where the
metric space is equipped with an additional valuation function, this
task becomes more feasible. Such functions, often called \emph{scores}
or \emph{height} function, are frequently naturally provided: cities
may be ranked by their population; the importance of scientific
publications maybe ranked by the $h$-index \cite{Hirsch05} of their
corresponding authors. A naïve approach for recommending relevant
items in such settings would be based on the claim: items with higher
scores are more relevant items. As this method seems reasonable for
many applications, some obstacles may arise if the ``highest'' items
of the topic may be concentrated into a specific region of the
underlying metric data space. For example, returning the twenty most
populated cities in the world as an overview for the city landscape
would return no European city\footnote{\url{https://en.wikipedia.org/wiki/List_of_largest_cities} on
  2019-06-16}, recommending the hundred highest mountain peaks of the
world would not lead to any knowledge about the mountainscapes
outsides of Asia\footnote{\url{https://en.wikipedia.org/wiki/List_of_highest_mountains_on_Earth}
  on 2019-06-16}.

To overcome this problem we propose a novel approach: we combine the
valuation measure (e.g., ``height'') and distances drawn from the
metric in order to provide new valuation functions on the set of
items, called \emph{prominence} and \emph{isolation}. In contrast to
the naïve approach, those functions do value an item based on its
height in relation to the valuations of the surrounding items. This
results in a valuation function on the set of items that reflects the
extend to which an item is locally outstanding. The basic idea behind
the novel valuation functions is the following. The prominence
function values an item based on the minimal descent (with respect to
the height function) that is needed to get to another point of at
least same height. Furthermore, the isolation function, sometimes also
called \emph{dominance radius}, values the distance to the next higher
point with respect to the given metric and height function. These
measures are adapted from the field of topography where topographic
isolation and topographic prominence are used in order to identify
outstanding mountain peaks. Our approach is based on~\cite{Schmidt18},
where the authors Schmidt \& Stumme proposed prominence and dominance
for networks. We will transfer and adapt these through generalization
to the realm of bounded metric space.

To give a first insight to the potential of the novel valuation
functions in knowledge graphs, we will empirically verify their
ability to identify relevant items for a given topic. For this we
employ a supervised machine learning task. We evaluate if isolation
and prominence functions can contribute to the task of identifying
relevant items in the sets of French and German cities.

The contributions of this paper are as follows:
\begin{inparaitem}
\item We propose prominence and isolation for bounded metric
  spaces. For this we generalize the results in~\cite{Schmidt18} which
  were limited to finite, undirected graphs.
\item We demonstrate an artificial machine learning task for
  evaluating novel valuation functions in metric data.
\item We introduce a general approach for using prominence and
  isolation to enrich metric data in knowledge graphs. We show
  empirically that this information helps to identify a set of
  representative items.
\end{inparaitem}

The remainder of this paper is organized as follows. In~\cref{sec:rel}
we give a short overview over related work. This is followed
by~\cref{sec:math} were the necessary mathematical foundation is laid
out. \cref{sec:appl} gives a first insight in how the novel valuation
functions can be employed in a possible recommendation process. We
evaluate this in~\cref{sec:exp} and conclude our work within~\cref{sec:conc}.

\section{Related Work}
\label{sec:rel}

Item recommendations for knowledge graphs is a contemporary topic of high
interest in research. Investigations cover for example music recommendation
using content and collaborative information~\cite{Oramas16} or movie
recommendations using PageRank like methods~\cite{Catherine16}. The former is
based on the common notion of embedding, i.e., embedding of the graph structure
into $d$-dimensional $\mathbb{R}$ vector spaces. The latter operates on the
relational structure itself. Our approach differs from those as it is based on
combining a valuation measure with the metric of the data space. Nonetheless,
given an embedding into an finite dimensional $\mathbb{R}$ vector space, one
could apply isolation and prominence in those as well. 

The novel valuation functions prominence and isolation are inspired by
topographic measures, which have their origin in the classification of mountain
peaks. The idea of ranking peaks solely by their absolute height was already
deprecated in 1978 by Fry in his work~\cite{Fry87}. The author introduced
prominence for geographic mountains, a function still investigated in this
realm, e.g., in Torres et. Al.~\cite{Torres18}, where the authors use deep
learning  methods to identify prominent mountain peaks. Another
recent step for this was made in~\cite{Kirmse17}, where the authors investigated
methods for discovering new ultra-prominent mountains. Isolation and more
valuations functions motivated in the orometric realm are collected
in~\cite{Helman05}.

Recently the idea of transferring orometric functions to different realms of
research gained attention: The authors of \cite{Nelson19} used topographic
prominence to identify population areas in several U.S.\ States.
In~\cite{Schmidt18} the authors Schmidt \& Stumme transferred prominence and
dominance, i.e., isolation, to co-author graphs in order to evaluate their
potential of identifying ACM Fellows. We build on this for proposing our
valuation functions on bounded metric data. This generalization results in a
wide range of applications.

% \subsection{Mathematical stuff dem model}
% \cite{Comic05} Morse complex etc
% Das Paper \cite{Schmidt18}

\section{Mathematical Modeling}
\label{sec:math}
Let us consider the following scenario: We have a data set $M$, consisting of a
set of items, in the following called \emph{points}, equipped with a metric
 $d$ and a valuation function $h$, in the following called \emph{height
  function}. The goal of the orometric (topographic) measures prominence and
isolation is, to provide measures that reflect the extend to which a point is
locally outstanding in its neighborhood. 

Let $M$ be a non-empty set and $d: M \times M \to \mathbb{R}_{\geq 0}$. We call
$d$ a \emph{metric} on the set $M$ iff
\begin{inparaenum}
\item $\forall x,y \in M:d(x,y)=0\iff x=y$, and 
\item $d(x,y)=d(y,x)$ for all $x,y \in M$, called symmetry,  and
\item $\forall x,y,z \in M: d(x,z) \leq d(x,y) + d(y,z)$, called triangle inequality.
\end{inparaenum}
If $d$ is a metric on $M$, we call $(M,d)$ a \emph{metric space} and if $M$ is
finite we call $(M,d)$ a \emph{finite metric space}. If there exists a
$C\in\mathbb{R}_{\geq0}$ such that we have $d(m,n)\leq C$ for all $m,n \in M$ ,
we call $(M,d)$ \emph{bounded}. For the rest of our work we assume that $|M|>1$
and $(M,d)$ is a bounded metric space. Additionally, we have that $M$ is equipped with a height
function (valuation / score function) $h:M \to \mathbb{R}_{\geq 0}, m\mapsto h(m)$.

\begin{definition}[Isolation]\label{def:isolation}
  Let $(M,d)$ be a bounded metric space and let $h: M \to \mathbb{R}_{\geq 0}$ be
  a height function on M. The \emph{isolation of a point} $x \in M$ is then
  defined as follows:
  \begin{itemize}
  \item If there is no point with at least equal height to $m$,
    than $\iso(m)\coloneqq\sup\{d(m,n)\mid n \in M\}$. The boundedness of $M$ guarantees
    the existence of this suprenum. 
  \item If there is at least one other point in $M$
    with at least equal height to $m$, we define its isolation by:
    \begin{equation*}
      \iso(m)\coloneqq\inf\{d(m,n)\mid n \in M \setminus \lbrace m \rbrace \wedge h(n) \geq h(m)\}.
    \end{equation*}
  \end{itemize}
\end{definition}

The isolation of a mountain peek is often called the \emph{dominance radius} or
sometimes the \emph{dominance}. Since the term \emph{orometric dominance} of a
mountain sometimes refers to the quotient of prominence and height, we
will stick to the term \emph{isolation} to avoid confusion.

While the isolation can be defined within the given setup, we have to equip our
metric space with some more structure in order to transfer the notion of
prominence. Informally, the prominence of a point is given by the minimal
vertical distance one has to descend to get to a point of at least the same
height. To adapt this measure to our given setup in metric spaces with a height
function, we have to define what a path is. Structures that provide paths in a
natural way are graph structures. For a given graph $G=(V,E)$ with vertex set
$V$ and edge set $E\subseteq{V\choose{2}}$, \emph{walks} are defined as
sequences of nodes $\{v_i\}_{i=0}^n$ which satisfy $\{v_{i-1},v_i\} \in E$ for
all $i \in \lbrace 1,...,n \rbrace$. If we also have $v_i \neq v_j$ for
$i \neq j$, we call such a sequence a \emph{path}. For $v,w\in V$ we say $v$ and
$w$ are \emph{connected} iff there exists path connecting them. Furthermore, we
denote by $G(v)$ the \emph{connected component} of $G$ containing $v$, i.e.,
$G(v)\coloneqq\{w\in V\mid v\ \text{is connected with}\ w\}$.

To use the prominence measure as introduced by Schmidt
\& Stumme in~\cite{Schmidt18}, which is indeed defined on graphs, we have to
derive an appropriate graph structure from our metric space.
 
The topic of graphs embedded in finite dimensional vector spaces, so called
spatial networks \cite{Barthelmy11}, is a topic of current interest. These
networks appear in real world scenarios frequently, for example in the modeling
of urban street networks \cite{Jiang04}.  Note that our setting, in contrast to
the afore mentioned, is not based on a priori given graph structure. In our
scenario the graph structure must be derived from the structure of the given
metric space.

Our approach is, to construct a \emph{step size graph} or
\emph{threshold graph}, where we consider points in the metric space as nodes and
connect two points through an edge, iff their distance is smaller then a given
threshold $\delta$. 

\begin{definition}($\delta$-Step Graph)\label{def:stepgraph}
Let $(M,d)$ be a metric space and $\delta > 0$. We define the
\emph{$\delta$-step graph} or \emph{$\delta$-threshold graph}, denoted by
$G_{\delta}$, as the tuple $\left( M,E_\delta \right)$ via
\begin{equation}
  E_{\delta}\coloneq \{ \{ m, n\} \in {M\choose{2}} \mid d(m,n) \leq \delta \}\}
\end{equation}
\end{definition}

This approach is similar to the one found in the realm of random
geometric graphs, where it is common sense to define random graphs by
placing points uniformly in the plane and connect them via edges if
their distance is less than a given threshold~\cite{Penrose2003}.

Since we introduced a possibility to derive a graph that just depends
on the metric space, we use a slight modification of the definition of
prominence compared to~\cite{Schmidt18} for networks.

\begin{definition}[Prominence in Networks]\label{def:prom}
  Let $G=(V,E)$ be a graph and let $h:V \to \mathbb{R}_{\geq 0}$ be a
  height function. The \emph{prominence} $\prom_G(v)$ of $v\in V$ is
  defined by
  \begin{equation}
    \label{eq:prom}
    \prom_{G}(v)\coloneqq\min\{h(v),\mindesc_{G}(v)\}
  \end{equation}
  where
  $\mindesc_{G}(v)\coloneqq \inf\{\max\{h(v)-h(u)\mid u\in p\}\mid p
  \in P_v\}$. The set $P_{v}$ contains all paths to vertices $w$ with
  $h(w)\geq h(v)$, i.e.,
  $P_{v}\coloneqq\{\{v_i\}_{i=0}^n \in P\mid v_0=v\wedge v_n\neq v  \wedge
  h(v_n)\geq h(v)\}$,
  where $P$ denotes the set of all paths of the graph $G$.
\end{definition}

Informally, $\mindesc_{G}(v)$ reflects on the minimal descent in order
to get to a vertex in $G$ which has a height of at least $h(v)$. For
this the definition makes use of the fact that $\inf\emptyset=\infty$
in cases where no such point exists. This case results in $\prom_G(v)$
being the height of $v$.  An essential distinction to the prior
definition in~\cite{Schmidt18} is, that we now consider all paths and
not just shortest paths. Based on this we are able to transfer the
notions above to metric spaces.

\begin{definition}[$\delta$-Prominence in Metric Spaces]\label{def:prominences}
  Let $(M,d)$ be a bounded metric space and
  $h:M \to \mathbb{R}_{ \geq 0 }$ be a height function. We define the
  $\delta$-prominence $\prom_\delta (m)$ of $m \in M$ as
  $\prom_{G_{\delta}}(v)$, i.e, the prominence of $m$ in the step
  graph $G_{\delta}$ from~\cref{def:stepgraph}.
\end{definition}

We now have a prominence term for all metric spaces that depends on a
parameter $\delta$ to choose. For all knowledge procedures, choosing
such a parameter is a demanding task.  Hence, we want to provide in
the following a natural choice for $\delta$. The ideas for this is
informally the following: We consider only those values for $\delta$
such that corresponding $G_{\delta}$ does not exhibit noise, i.e.,
there is no element without a neighbor. In other words, we allow only
those values of $\delta$ such that
$\forall m\in M\exists e\in E_\delta:m\in e$.

\begin{definition}[Minimal Threshold]\label{def:minimal-delta}
  For $(M,d)$ a bounded metric space with $|M|>1$ we define the
  \emph{minimal threshold} $\delta_M$ of $M$ as
  \begin{equation*}
    \delta_{M}\coloneqq \sup\{\inf \{d(m,n)\mid n \in M \setminus\{m\}\}\mid m \in M\}.
  \end{equation*}
\end{definition}

Based on this definition a natural notion of prominence for metric
spaces (equipped with a height function) emerges via a limit process.

\begin{lemma}
  Let $M$ be a bounded metric and $\delta_M$ as
  in~\cref{def:minimal-delta}. For $m \in M$ the descending limit
  \begin{equation}
    \lim_{\delta \searrow \delta_M} \prom_\delta (m)
  \end{equation}
  exists.
\end{lemma}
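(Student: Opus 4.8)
The plan is to show that the map $\delta\mapsto\prom_\delta(m)$ is non-increasing and bounded on $(\delta_M,\infty)$, and then to invoke the elementary fact that a bounded monotone real function admits one-sided limits at every point; the descending limit at $\delta_M$ then exists automatically.

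First I would record how the step graph grows with $\delta$. If $\delta_1\le\delta_2$ then $E_{\delta_1}\subseteq E_{\delta_2}$, since $d(a,b)\le\delta_1$ forces $d(a,b)\le\delta_2$. Hence every path of $G_{\delta_1}$ is also a path of $G_{\delta_2}$, so the admissible set $P_m$ from \cref{def:prom} (the paths from $m$ to some vertex of height at least $h(m)$) only enlarges as $\delta$ increases. Because $\mindesc_{G_\delta}(m)$ is the infimum of $\max\{h(m)-h(u)\mid u\in p\}$ over $p\in P_m$, enlarging the index set can only lower this infimum; with the convention $\inf\emptyset=\infty$ this remains true when $P_m$ first becomes non-empty. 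Thus $\delta\mapsto\mindesc_{G_\delta}(m)$ is non-increasing, and therefore so is
\[
  \prom_\delta(m)=\min\{h(m),\mindesc_{G_\delta}(m)\}.
\]

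For boundedness, the outer minimum gives $\prom_\delta(m)\le h(m)$ for every $\delta$, while $\prom_\delta(m)\ge 0$ because any $p\in P_m$ contains $m$ itself and so $\max\{h(m)-h(u)\mid u\in p\}\ge h(m)-h(m)=0$. Hence $\prom_\delta(m)\in[0,h(m)]$ throughout $(\delta_M,\infty)$. A non-increasing function bounded above by $h(m)$ has right-hand limit $\sup\{\prom_\delta(m)\mid\delta>\delta_M\}\le h(m)$ at $\delta_M$, which is precisely the claimed descending limit.

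The one delicate point---and the only real obstacle---is the monotonicity of $\mindesc$ under edge addition across the transition where $P_m$ passes from empty to non-empty: one must confirm that the value can only drop from $\infty$ to a finite number, consistently with monotonicity. This is harmless once one observes that $\prom_\delta(m)$ caps everything at the finite value $h(m)$, so the function under study is genuinely real-valued and the bounded-monotone limit argument applies verbatim. Note that $\delta_M$ enters only as the left endpoint of the domain; the same reasoning yields a descending limit at any threshold approached from the right.
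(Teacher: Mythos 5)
Your proposal is correct and follows essentially the same route as the paper's own proof: establish $E_{\delta_1}\subseteq E_{\delta_2}$ for $\delta_1\le\delta_2$, deduce that $\prom_\delta(m)$ is non-increasing in $\delta$ and bounded above by $h(m)$, and invoke the standard existence of one-sided limits for bounded monotone functions. Your extra care about the $\inf\emptyset=\infty$ transition and the lower bound $0$ is a welcome refinement but does not change the argument.
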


\begin{proof}
  Fix any $\hat{\delta} > \delta_M$ and consider on the open interval
  from $\delta_M$ to $\hat{\delta}$ the function that maps $\delta$ to
  $\prom_\delta(m)$:
  \[\prom_{(.)}(m) :\ ]\delta_M, \hat{\delta}[ \to \mathbb{R}, \delta
    \mapsto \prom_\delta (m).\] It is well known that it is sufficient
  to show that $\prom_{(.)}(m)$ is monotone decreasing and bounded
  from above. Since we have for any $\delta$ that
  $\prom_\delta (m) \leq h(m)$ holds, we need to show the monotony.
  Let $\delta_1, \delta_2$ be in $] \delta_M, \hat\delta [$ with
  $\delta_1 \leq \delta_2$. If we consider the corresponding graphs
  $(M,E_{\delta_1})$ and $(M, E_{\delta_2})$, it easy to see
  $E_{\delta_1} \subseteq E_{\delta_2}$. Hence, we have to consider
  more paths in~\cref{eq:prom} for $E_{\delta_2}$, resulting in a not
  larger value for the infimum. We obtain
  $\prom_{\delta_1}(m) \geq \prom_{\delta_2}(m)$, as required.
\end{proof}

This leads in a natural way directly to the following definition. 

\begin{definition}[Prominence in Metric Spaces]\label{def:metric-prom}
  If $M$ is a bounded metric space with $|M|>1$ and a height function $h$, the
  prominence $\prom(m)$ of $m$ is defined as:
\begin{equation}
\prom(m) \coloneqq \lim_{\delta \searrow \delta_M} \prom_\delta (m).
\end{equation}
\end{definition}

Note, if we want to compute prominence on a real world finite metric data set,
it is possible to directly compute the prominence values: in that case the
supremum in \cref{def:minimal-delta} can be replaced by a maximum and the
infimum by a minimum, which leads to $\prom(m)$ being equal to
$\prom_{\delta_M}(m)$.  Hence, we can compute prominence and isolation for every
point in the finite data set. There are results for efficiently creating such
threshold graphs~\cite{Bentley75}. However, for our needs in this work, in
particular in the experiment section, a quadratic brute force approach for
generating all edges is sufficient.
We want to show that our prominence definition for bounded metric spaces is a
natural generalization of~\cref{def:prom}.

\begin{lemma}
  Let $G=(V,E)$ be a finite, connected graph with $ \left| V \right| \geq
  2$. Consider $V$ equipped with the shortest path metric as a  metric space. Then
  the prominence $\prom_{G}(\cdot)$ from~\Cref{def:prom} and $\prom(\cdot)$
  from~\cref{def:metric-prom} coincide.
\end{lemma}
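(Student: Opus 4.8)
The plan is to reduce the statement to a single structural observation: the shortest path metric on a finite connected graph takes only integer values, so the step graph $G_\delta$ is constant and equal to $G$ throughout a right-neighborhood of the minimal threshold $\delta_M$. This collapses the limit in \cref{def:metric-prom} to a plain evaluation.

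First I would verify the hypotheses and compute $\delta_M$. Writing $d$ for the shortest path metric, $(V,d)$ is a bounded finite metric space with $|V| \geq 2$, so \cref{def:metric-prom} applies. For distinct vertices $d(m,n) \in \{1,2,\dots\}$, with $d(m,n)=1$ precisely when $\{m,n\} \in E$. Because $G$ is connected and $|V|\geq 2$, every vertex has a neighbor, whence $\inf\{d(m,n)\mid n\neq m\}=1$ for each $m$, and therefore $\delta_M = \sup_m 1 = 1$.

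Next I would identify $G_\delta$ near $\delta_M$. For every $\delta \in [1,2)$ the inequality $d(m,n)\leq\delta$ is equivalent to $d(m,n)=1$, by integrality of $d$; hence $E_\delta = E$ and $G_\delta = G$ on the entire interval $[1,2)$. Consequently $\prom_\delta(m)=\prom_{G_\delta}(m)=\prom_G(m)$ is constant for $\delta$ in the right-neighborhood $(1,2)$ of $\delta_M$, noting that the height function $h$ is literally the same object in both definitions since the ground set $M=V$ is unchanged.

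Finally, since the preceding lemma guarantees that $\lim_{\delta\searrow\delta_M}\prom_\delta(m)$ exists, and we have just shown $\prom_\delta(m)\equiv\prom_G(m)$ on a right-neighborhood of $\delta_M$, the limit equals $\prom_G(m)$, which is exactly the assertion. I do not expect a genuine obstacle here; the only point requiring care is confirming that adjacency corresponds exactly to distance $1$ (so that $G_{\delta_M}$ recovers $G$ faithfully, with neither missing nor spurious edges) and that the integrality of the metric yields a full interval, rather than a single point, on which $G_\delta$ stays constant. Everything else is a direct unwinding of the definitions.
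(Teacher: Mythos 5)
Your proof is correct and follows essentially the same route as the paper's: compute $\delta_M=1$ and observe that the threshold graph recovers $G$ exactly, so the two prominence notions agree. You are in fact somewhat more careful than the paper, which evaluates directly at $\delta=\delta_M$ and leaves implicit the step you make explicit — that integrality of the shortest path metric keeps $G_\delta$ constant on all of $[1,2)$, so the descending limit genuinely equals $\prom_G(m)$.
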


\begin{proof}
  Let $M\coloneqq V$ be equipped with the shortest path metric $d$ on $G$. As $G$
  is connected and has more than one node, we have $\delta _M=1$. This yields
  that $(M,E_{\delta_M})$ from~\Cref{def:stepgraph} and $G$ are equal. Hence,
  the prominence terms coincide.
\end{proof}

\section{Application}
\label{sec:appl}

\subsection{Score based item recommending}

As an application of our valuation functions, we envisage a general approach for
a score based item recommending process. The task of item recommending in
knowledge graphs is a current research topic. However, most approaches are
solely based on knowledge about preferences of the given user and graph
structural properties, often accessed through knowledge graph embeddings. The
idea of the recommendation process we imagine differs from those. We stipulate
on a procedure that is based on the information entailed in the connection of
the metric aspects of the data together with some (often present) height
function. Of course, we are aware that this limits our approach to metric data
in knowledge graphs, only. Nonetheless, given the large amounts of metric item
sets in prominent knowledge graphs, we claim the existence of a plenitude of
applications. For example, while considering sets of cities, such a system could
recommend a \emph{relevant} subset, based on a height function, like population,
and a metric, like geographical distances. By doing so, we introduce a source of
information for recommending metric data in relational structures, like
knowledge graphs. A common approach for analyzing and learning in knowledge
graphs is knowledge graph embedding. There is an extensive amount of research
about that, see for example~\cite{Wang14,Bordes11}. Since our novel methods rely
solely on bounded metric spaces and some valuation function, one may apply those
after the embedding step as well. In particular, one may use isolation and
prominence for investigating or completing knowledge graph embeddings. This
constitutes our second envisioned application. Finally, common item recommending
scores/ranks can also be used as height functions in our sense. Hence, computing
prominence and isolation for already setup recommendation systems is another
possibility. Here, our valuation functions have the potential to enrich the
recommendation process with additional information. In such a way our measures
can provide a novel additional aspect to existing approaches.

The realization and evaluation of our proposed recommendation approach is out of
scope of this paper. Nonetheless, we want to provide some first insights for the
applicability of valuation functions for item sets based on empirical
experiments. As a first experiment, we will evaluate if isolation and prominence
help to separate important and unimportant items in specific item sets in
Wikidata. More specifically, we will evaluate if the valuation functions help to
differentiate important and unimportant municipalities in the countries of
France and Germany, solely based on their geographic metric properties and their
population as height function.

\subsection{Enriching metric item sets in Wikidata}
In this section we depict an universal approach for enriching finite metric item
sets in Wikidata using the introduced functions isolation and prominence. In
order to enhance the grasp for the reader, we accompany every step with a
running example. To which extend do municipalities stand out with respect to
their local surroundings, based on population (height)? The particular
steps are as follows:

\begin{enumerate}
\item \textbf{Identify a metric item set in the knowledge graph:} For this we
  need to identify the metric space of all items in some considered set. One may
  pre-compute their pairwise distances, if applicable.

  For our experiments we identify the set of German municipalities and French
  municipalities with their geographic coordinates in longitude and latitude and
  compute as well their pairwise (approximated) distances.
\item \textbf{Identify height function:} Since we want to compute the prominence
  and isolation of the items, we also have to identify a height function. Hence,
  we need to identify a valued property shared by all items identified in the
  step above which is also relevant to the enriching task.

  In our running example we identify the population of the municipalities as
  such a relevant shared valued property. 

\item \textbf{Compute isolation:} Based on the steps before we are now
  abled to compute the isolation for all items in the item sets.

  For our running example, we compute the isolation for all municipalities for
  the item sets of Germany and France. 

\item \textbf{Compute the threshold graph:} For computing the prominence values
  for all items in the item sets,  we need to compute the threshold  graph and the threshold $\delta_{M}$
  using~\cref{def:stepgraph} and ~\cref{def:minimal-delta}.

  In our running examples, for Germany we compute the value $\delta_{M}\approx 32$
  kilometers. This value is necessary in order to preserve a connection between
  Borkum (Q25082) and Krummhörn (Q559432). For the French item set we compute
  $\delta_{M}\approx 54$ kilometers in order to preserve the connection between
  Mende (Q191772) and La Grand-Combe (Q239967).

\item \textbf{Compute prominences:} Equipped with the threshold graph we are now
  able to compute the prominence values for all items
  using~\cref{def:prominences}.
\end{enumerate}

\subsection{Resulting Questions}
The sections above raise the natural question for an objective evaluation of the
functions prominence and isolation. In this section we present such an
evaluation scheme by means of two qualitative questions connected to this task.

Assume we have given a bounded metric space $M$ representing our data set and a
given height function $h$.  The aim of the research questions we propose in the
following is to evaluate if our functions isolation and prominence provide
useful information about the relevance of given points in the metric space. If
$(M,d,h)$ is a metric space equipped with an additional height function, let the
map $c:M \to \lbrace 0,1 \rbrace$ be a binary function that classifies the
points in the data set as relevant (1) or not (0). We want to answer the
following question to evaluate if there is a connection between the extent to
which a data point is local outstanding (i.e., has high isolation and
prominence) and relevance.  We connect this to our running example using the
classification function that classifies municipalities having a university (1)
and municipalities that do not have an university (0). We admit that the
underlying classification is not meaningful in itself. However, since this setup
is essentially a benchmark framework (in which we assume cities with
universities to be more relevant) we refrain from employing a more meaningful
classification task in favor of a controllable classification scenario.

\begin{enumerate}
\item \textbf{Are prominence and isolation alone characteristical for
    relevance?} \newline We use isolation and/or prominence for a given set of
  data points as features. To which extend do these features improve learning a
  classification function for relevance?

  This question manifests in our running example as follows: are prominence and
  isolation useful features to classify the university locations of France and
  Germany?
\item \textbf{Do prominence and isolation provide additional information, not
    catered by the absolute height?} \newline Do prominence and isolation
  improve the prediction performance of relevance compared to just
  using the absolute height?  Does a classifier that uses prominence and
  isolation as additional features produce better results than a classifier
  that just uses the absolute height?

  In the context of our running example: Do prominence and isolation of
  municipalities add information to the population feature, that help to
  characterize the university locations, compared to using the plain population
  value?
\end{enumerate}

We will evaluate the proposed setup in the realm a knowledge graph and take on
the questions stated above in the following section and present some
experimental evidence.

\section{Experiments}
\label{sec:exp}

\subsection{Dataset}
We extract information about municipalities in the countries of Germany and
France from the Wikidata knowledge graph. This knowledge graph is a structure
that stores knowledge via \emph{statements}, linking \emph{entities} via
\emph{properties} to \emph{values}. A detailed description can be found in
\cite{Kroetzsch14}, while \cite{Hanika19} gives an explicit mathematical
structure to the Wikidata graph and shows how to use the graph for extracting
implicational knowledge from Wikidata subsets. We investigate in the following
if prominence and isolation of a given municipality can be used as features to
predict university locations in a classification setup.  We use the query
service of Wikidata\footnote{\url{https://query.wikidata.org/}} to extract
points in the country maps from Germany and France and to extract all their
universities. For every relevant municipality we extract the coordinates and the
population.  The necessary SPAQRL queries we employed for all the followings
tasks are documented in our GitHub
repository\footnote{\url{https://github.com/mstubbemann/Orometric-Methods-in-Bounded-Metric-Data}}
for our paper project. While constructing the needed metric space, we have to
overcome some obstacles.

\begin{itemize}
\item Wikidata provides different relations for extracting items that are
  instances of the notion city. The most obvious choice is to employ the
  \emph{instance of} (P31) property for the item \emph{city} (Q515). Using this,
  including \emph{subclass of} (P279), we find insufficient results for
  generating our data sets. More specific, we find only 102 French cities and
  2215 German cities.\footnote{Queried on 07-08-19} For Germany, there exists a
  more commonly used item \emph{urban municipality of Germany} (Q42744322) for
  extracting all cities, while to the best of our knowledge, a counterpart for
  France is not provided.
\item The preliminary investigation led us to use not cities but
  \emph{municipality} (Q15284), again including the \emph{subclass of} (P279)
  property, with more than 5000 inhabitants.
\item Since there are multiple french municipalities that are not located in the
  mainland of France, we encounter problems for constructing the metric
  space. To cope with that we draw a basic approximating square around the
  mainland of France and consider only those municipalities inside.
\item We find the class of every municipality, i.e, university location or
  non-university location, through the following approach.  We use the
  properties \emph{located in the administrative territorial entity} (P131) and
  \emph{headquarters location} (P159) on the set of all universities and checked
  if these are set in Germany or France. An example of a German University that
  has not set P131 is \emph{TU Dortmund} (Q685557).\footnote{last checked on
    19-06-25}
\item Using a Python script we then matched the list of municipalities with the
  indicated properties of the universities. This method was necessary for the
  following reason. Some universities are not related to municipalities through
  property P131. For example, the item \emph{Hochschule Niederrhein} (Q1318081)
  is located in the administrative location \emph{North Rhine-Westphalie}
  (Q1198), which is a federal state containing multiple municipalities. For these
  cases we checked the university locations manually. Some basic statistics on
  our dataset can be found in~\cref{table:basics}, a graphic overview of the
  municipality and university distribution is depicted in~\cref{fig:countries}.
\item During the construction of the data set we encounter universities that are
  associated to a country having neither \emph{located in the administrative
    territorial entity} (P131) nor \emph{headquarters location} (P159). There
  are ten German and twelve French universities for this case. We checked them
  manually and were able to discard them all for different reasons, for example,
  items that were  wrongly related to the university item.
\end{itemize}

\begin{table}
  \caption{Basic statistics of the country datasets extracted from
    wikidata.}
  \label{table:basics}
  \begin{center}
    \begin{tabular}{lrr}
      \toprule
      {} &  Municipalities &  University Locations \\
      \midrule
      France  &            2063 &                    92 \\
      Germany &            2863 &                   164 \\
      \bottomrule
    \end{tabular}
  \end{center}
\end{table}

\begin{figure*}[htbp]
    \includegraphics[trim=9em 0 0 0, width=0.8\columnwidth,
    clip]{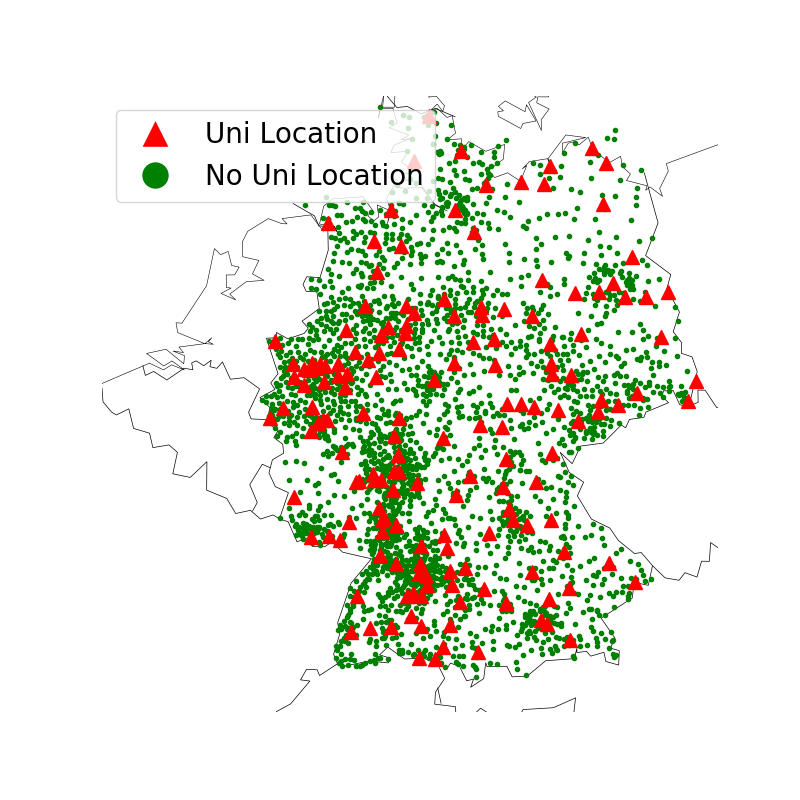}
    \includegraphics[trim=0 0 9em 0, width=0.8\columnwidth,
    clip]{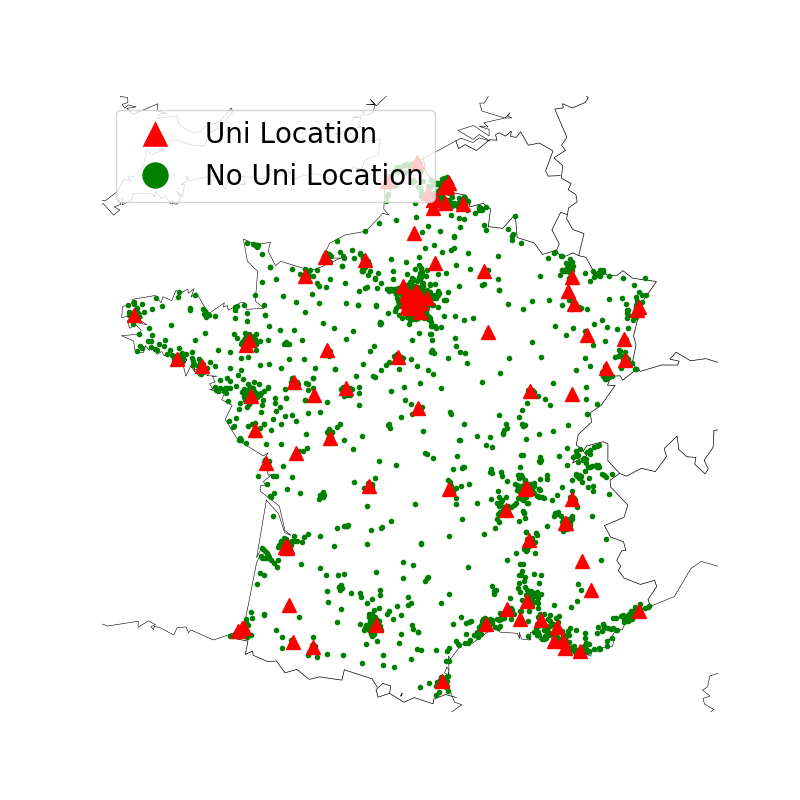}
  \caption{Municipalities in Germany (left) and France (right) having an
    university (Uni) and not having an university (No Uni).}
  \label{fig:countries}
\end{figure*}

% \begin{figure}
%   \begin{minipage}{\linewidth}
%     \centering
%     \begin{minipage}[t]{0.45\linewidth}
%       \centering
%       \includegraphics[width=\linewidth]{mixed_germany.png}
%     \end{minipage}%
%     \hfill
%     \begin{minipage}[t]{0.45\linewidth}
%       \centering
%       \includegraphics[width=\linewidth]{mixed_france.png}
%     \end{minipage}
%   \end{minipage}
% \end{figure}

\subsection{ Binary Classification Task }
\subsubsection*{Setup}
For both France and Germany, we compute the prominence and isolation of all
data points. We then normalize the population, isolation and prominence values
to be in the range from $0$ to $1$. Since our data set is highly imbalanced,
most of the common classifiers would tend to simply predict the majority
class. A variety of methods were proposed in the past to deal with such
problems. An overview can be found in~\cite{Kotsiantis06}. Sampling approaches
like undersampling or oversampling via the creation of synthetic
examples~\cite{Chawla02} are an established method for dealing with such
imbalances. We want to stress out again that the goal for the to be introduced
classification task is not to identify the best classifier. Rather we want to
produce evidence for the applicability of employing isolation and prominence as
(more suitable) features for learning a classification function. Since we need a
classification algorithm that provides useful predictions on single features, we
decide to use logistic regression with $L^2$ regularization and Support Vector
Machines~\cite{Cortes95} with a radial kernel. To overcome the imbalance, we use
inverse penalty weights with respect to the class distribution.

For our experiments, we use the algorithms for Support Vector Machines
(\texttt{SVC}) and \texttt{LogisticRegression} that are provided by the Python library
Scikit-Learn~\cite{Pedregosa12}. To solve the resulting minimization problem,
our setup of Scikit-Learn uses the LIBLINEAR library, see \cite{fan08}. As
penalty factor for the \texttt{SVC} we set $C=1$, however, we also experiment
with $C\in\{0.5,1,2,5,10,100\}$. As in~\cite{Akbani04}, where the authors
compared multiply methods to use support vector machines for imbalanced data
sets, we choose $\gamma=1$ for our radial kernel.  For all possible combinations of
population, isolation and prominence we use hundred iterations of five
cross-validation. We analyze to which extent the novel valuation functions help to
classify university municipalities in Germany and France.

\subsubsection*{Evaluation}

We use the g-mean (i.e., geometric mean) as evaluation function. Consider the confusion
matrix depicted in~\cref{tabdingohneschlaege}. 
\begin{table}[b]
  \caption{Confusion Matrix}
  \begin{tabular}{l|c|c}
    \toprule
    {} &  Predicted Negative &  Predicted Positive \\
    \midrule
    Actual Negative  &TN (True Negative) &FP (False Positive)\\
    Actual Positive &FN (False Negative) &TP (Tue Positive)\\
    \bottomrule
  \end{tabular}
    \label{tabdingohneschlaege}
\end{table}

Overall accuracy (i.e., how many test examples are classified correctly) is
highly misleading in the context of heavily imbalanced data. It is obvious that
for any classifier function predicting the majority would lead to an excellent
accuracy~\cite{Chawla10}. Therefore, we will evaluate the classification
decisions by using the geometric mean of the accuracy on the positive instances,
$acc_+ := \frac{TP}{TP+FN}$, often called sensitivity, and the accuracy on the
negative instances $acc_- :=\frac{TN}{TN+FP}$, often called specificity. Hence,
the g-mean score is then defined by the formula $g_{mean} :=\sqrt{acc_+ \cdot
  acc_-}$.  The evaluation function g-mean is established in the topic of
imbalanced data mining. It is mentioned in \cite{He09} and used
for evaluation in \cite{Akbani04}.

In our setup, the university locations are the positive class, meaning
that $acc_+$ corresponds to the classification results on the
university locations, and $acc_-$ corresponds to the accuracy on non
university locations.
For our experiments we now compare the values for g-mean for the following
cases. First, we train a classifier function purely on the features population,
prominence or isolation. Secondly, we also try combinations of them for the
training process. We consider in all those experiments the classifier solely
trained using the population feature as baseline, since this classification
function does not incorporate any metric aspects of the data set.
Then, an increase in g-mean when using prominence or isolation together with the
population function is evidence for the utility of the introduced valuation
functions. Furthermore, when directly comparing a classifier function that is
trained on isolation/prominence with a version trained on population, an
increase in g-mean strongly indicates the importance of the novel features.

In our experiments, we are not expecting high values for g-mean, since the
placement of university locations depends on many additional features, including
historical evolution of the country and political decisions. However, we claim
that the evaluation setup above is sufficient to show that the novel features
are potentially helpful for identifying interesting and useful items in
different tasks.

\subsubsection*{Results}

\begin{table*}[htbp]
  \begin{center}
    \caption{Results of the binary classification task. The results
      for every combination of features and every classifier. The best
      value for every combination of features is printed in bold.}
    po=population, pr=prominence, is=isolation \\
    SVM= Support Vector Machine, LR = Logistic Regression
    \label{tab:binary}
    \begin{tabular}{ll||rr|rr||rr|rr}
      \toprule
      Country &  & \multicolumn{4}{l}{France} & \multicolumn{4}{l}{Germany} \\
      Classifier &  & \multicolumn{2}{l}{SVM} & \multicolumn{2}{l}{LR} & \multicolumn{2}{l}{SVM} & \multicolumn{2}{l}{LR} \\
              & Score &   mean &    std &   mean &    std &    mean &    std &   mean &    std \\
      \midrule
      iso & acc+ & 0.5700 & 0.0075 & 0.6185 & 0.0036 &  0.5407 & 0.0044 & 0.6201 & 0.0043 \\
              & acc- & 0.9595 & 0.0010 & 0.9468 & 0.0010 &  0.9751 & 0.0003 & 0.9564 & 0.0009 \\
              & g-mean & 0.7395 & 0.0048 & 0.7652 & 0.0024 &  0.7261 & 0.0030 & 0.7701 & 0.0027 \\
      \midrule
      pr & acc+ & 0.2273 & 0.0041 & 0.3967 & 0.0065 &  0.1643 & 0.0035 & 0.3380 & 0.0075 \\
              & acc- & 1.0000 & 0.0000 & 0.9968 & 0.0004 &  1.0000 & 0.0000 & 0.9990 & 0.0002 \\
              & g-mean & 0.4767 & 0.0043 & 0.6288 & 0.0051 &  0.4054 & 0.0044 & 0.5811 & 0.0065 \\
      \midrule
      po & acc+ & 0.4684 & 0.0035 & 0.5815 & 0.0139 &  0.3370 & 0.0065 & 0.4949 & 0.0057 \\
              & acc- & 0.9932 & 0.0004 & 0.9834 & 0.0008 &  0.9970 & 0.0003 & 0.9886 & 0.0005 \\
              & g-mean & 0.6820 & 0.0025 & 0.7562 & 0.0092 &  0.5796 & 0.0056 & 0.6994 & 0.0041 \\
      \midrule
      iso+pr & acc+ & 0.5577 & 0.0100 & 0.6114 & 0.0088 &  0.5109 & 0.0075 & 0.5915 & 0.0061 \\
              & acc- & 0.9616 & 0.0011 & 0.9499 & 0.0008 &  0.9782 & 0.0006 & 0.9648 & 0.0009 \\
              & g-mean & 0.7323 & 0.0065 & 0.7621 & 0.0055 &  0.7069 & 0.0052 & 0.7554 & 0.0040 \\
      \midrule
      iso+po& acc+ & 0.6038 & 0.0131 & 0.6273 & 0.0050 &  0.6012 & 0.0055 & 0.6549 & 0.0061 \\
              & acc- & 0.9691 & 0.0011 & 0.9611 & 0.0007 &  0.9809 & 0.0007 & 0.9721 & 0.0005 \\
              & g-mean &\textbf{0.7649} & 0.0083 & \textbf{0.7764} & 0.0031 &  \textbf{0.7680} & 0.0035 & \textbf{0.7979} & 0.0037 \\
      \midrule
      pr+po & acc+ & 0.4770 & 0.0060 & 0.5543 & 0.0091 &  0.3524 & 0.0029 & 0.4966 & 0.0086 \\
              & acc- & 0.9960 & 0.0004 & 0.9895 & 0.0007 &  0.9978 & 0.0001 & 0.9927 & 0.0005 \\
              & g-mean & 0.6892 & 0.0044 & 0.7406 & 0.0061 &  0.5930 & 0.0024 & 0.7021 & 0.0061 \\
      \midrule
      iso+pr+po & acc+ & 0.5992 & 0.0115 & 0.6233 & 0.0066 &  0.5945 & 0.0053 & 0.6410 & 0.0075 \\
              & acc- & 0.9694 & 0.0011 & 0.9629 & 0.0008 &  0.9817 & 0.0006 & 0.9744 & 0.0006 \\
              & g-mean & 0.7622 & 0.0073 & 0.7747 & 0.0041 &  0.7640 & 0.0034 & 0.7903 & 0.0046 \\
      \bottomrule
    \end{tabular}
  \end{center}
\end{table*}

The results of our evaluation can be found in Table \ref{tab:binary}. In the
following we collect the observations drawn from this table.

\begin{inparaitem}
\item \textbf{Isolation is a good indicator for structural relevance.}
  Considering the results for both countries we notice that using isolation as
  the only feature leads to a solid prediction of university and non-university
  locations. For both countries and classifiers, it outperforms population.
\item \textbf{Combining absolute height with our valuation functions leads to
    better results.} Combining our orometric functions with population leads to
  better performance compared to solely the population feature.
\item \textbf{Prominence is not useful as a solo indicator.} Our result raises
  confidence that prominence alone is not an useful indicator for finding
  university locations. We may propose the following explanation. Prominence is
  a very strict valuation function: recall that we constructed the graphs by
  using distance margins as indicators for edges, leading to a dense graph
  structure in more dense parts of the metric space. It follows that a point in
  a more dense part has many neighbors and thus many potential paths that may
  lead to a very low prominence value. Observing
  definition~\cref{def:prom}, one can see that having a higher neighbor,
  with respect to the height function, always leads to a prominence value of
  zero. As mentioned earlier, the threshold is about 32 kilometers for Germany
  and 54 kilometers for France. Hence, a municipality has a not vanishing
  prominence if it is the most populated point in a radius of over 32
  kilometers, respectively 54 km. Only 75 municipalities of France have non zero
  prominence, with 41 of them being university locations. Germany has 124
  municipalities with positive prominence with 78 of them being university
  locations. Thus, prominence alone as a feature is insufficient for the
  prediction of university locations. As indicated in~\Cref{tab:binary}, the low
  g-mean score results from bad accuracy on the positive instances. Overall, it
  is an useful feature for identifying outstanding ``peaks''.
\item \textbf{The results for Germany differ from the results for France.} The
  margin in which isolation outperforms population as solely feature is for
  Germany greater than for France. The same holds for the score improvement if
  we add prominence and isolation as features to population. We assume that this
  observation is based on the difference in the geographic population
  distribution in France and in Germany: Having another look
  at~\Cref{fig:countries}, one may observe a tendency of clustering of
  university locations in some French areas. For example, looking at the area
  around Paris, one may observe a variety of universities located in the
  regional surrounding. The represented municipalities are all dominated by the
  nearby city Paris. As a consequence, they have a low isolation and prominence
  value.
\item \textbf{Support vector machine and logistic regression lead to similar
    results.} To the question, whether our valuation functions improve the
  classification compared with the population feature, support vector machines and
  logistic regressions provide the same answer: isolation always outperforms
  population, a combination of all features is always better then using just the
  plain population feature.
\item\textbf{Support vector machine penalty parameter.} Finally, for our last
  test we check the different results for support vector machines using the
  penalty parameters $C\in\{0.5,1,2,5,10,100\}$. We observe that increasing the
  penalty results in better performance using the population feature. However,
  for lower values of $C$, i.e., less overfitting models, we see better
  performance in using the isolation feature. In short, the more the model
  overfits due to $C$, the less useful are the novel valuation functions we
  introduced in this paper.
\end{inparaitem}

\section{Conclusion and Outlook}
\label{sec:conc}
In this work, we presented a novel approach to identify outstanding elements in
item sets. For this we employed orometric valuation functions, namely prominence
and isolation. We investigated a computationally reasonable transfer to the
realm of bounded metric spaces. In particular, we generalized previously known
results that were researched in the field of finite networks.

The theoretical work was motivated by the observation that knowledge graphs,
like Wikidata, do contain huge amounts of metric data. These are often equipped
with some kind of height functions in a natural way. Based on this we proposed
in this work the groundwork for an item recommending scheme. This envisioned
system would be capable of enriching conventional setups.

To evaluate the capabilities for identifying outstanding items we selected an
artificial classification task. We identified all French and German
municipalities from Wikidata and evaluated if a classifier can learn a
meaningful connection between our valuation functions and the relevance of a
municipality. To gain a binary classification task and to have a benchmark, we
assumed that universities are primarily located at relevant municipalities.  In
consequence, we evaluated if a classifier can use prominence and isolation as
features to predict university locations. Our results showed that isolation and
prominence are indeed helpful for identifying relevant items.

For future work we propose to develop the conceptualized item recommender system
and to investigate its practical usability in an empirical user
study. Furthermore, we urge to research the transferability of other orometric
based valuation functions. Finally, we acknowledge that our results about
valuation functions in metric spaces are surely already present in mathematical
theory. To identify the related mathematical notions and therefore to nourish
from advanced mathematical results would be the next theoretical goal.

\begin{acks}
  The authors would like to express thanks to Dominik Dürrschnabel for fruitful
  discussions.  This work was funded by the German Federal Ministry of Education
  and Research (BMBF) in its program ``Quantitative Wissenschaftsforschung'' as
  part of the REGIO project under grant 01PU17012.
\end{acks}

\citestyle{acmnumeric}
\bibliographystyle{ACM-Reference-Format}
\bibliography{literature}

%%% -*-BibTeX-*-
%%% Do NOT edit. File created by BibTeX with style
%%% ACM-Reference-Format-Journals [18-Jan-2012].

\begin{thebibliography}{26}

%%% ====================================================================
%%% NOTE TO THE USER: you can override these defaults by providing
%%% customized versions of any of these macros before the \bibliography
%%% command.  Each of them MUST provide its own final punctuation,
%%% except for \shownote{}, \showDOI{}, and \showURL{}.  The latter two
%%% do not use final punctuation, in order to avoid confusing it with
%%% the Web address.
%%%
%%% To suppress output of a particular field, define its macro to expand
%%% to an empty string, or better, \unskip, like this:
%%%
%%% \newcommand{\showDOI}[1]{\unskip}   % LaTeX syntax
%%%
%%% \def \showDOI #1{\unskip}           % plain TeX syntax
%%%
%%% ====================================================================

\ifx \showCODEN    \undefined \def \showCODEN     #1{\unskip}     \fi
\ifx \showDOI      \undefined \def \showDOI       #1{#1}\fi
\ifx \showISBNx    \undefined \def \showISBNx     #1{\unskip}     \fi
\ifx \showISBNxiii \undefined \def \showISBNxiii  #1{\unskip}     \fi
\ifx \showISSN     \undefined \def \showISSN      #1{\unskip}     \fi
\ifx \showLCCN     \undefined \def \showLCCN      #1{\unskip}     \fi
\ifx \shownote     \undefined \def \shownote      #1{#1}          \fi
\ifx \showarticletitle \undefined \def \showarticletitle #1{#1}   \fi
\ifx \showURL      \undefined \def \showURL       {\relax}        \fi
% The following commands are used for tagged output and should be
% invisible to TeX
\providecommand\bibfield[2]{#2}
\providecommand\bibinfo[2]{#2}
\providecommand\natexlab[1]{#1}
\providecommand\showeprint[2][]{arXiv:#2}

\bibitem[\protect\citeauthoryear{Akbani, Kwek, and Japkowicz}{Akbani
  et~al\mbox{.}}{2004}]%
        {Akbani04}
\bibfield{author}{\bibinfo{person}{Rehan Akbani}, \bibinfo{person}{Stephen
  Kwek}, {and} \bibinfo{person}{Nathalie Japkowicz}.}
  \bibinfo{year}{2004}\natexlab{}.
\newblock \showarticletitle{Applying Support Vector Machines to Imbalanced
  Datasets}. In \bibinfo{booktitle}{\emph{Machine Learning: ECML 2004}},
  \bibfield{editor}{\bibinfo{person}{Jean-Fran{\c{c}}ois Boulicaut},
  \bibinfo{person}{Floriana Esposito}, \bibinfo{person}{Fosca Giannotti}, {and}
  \bibinfo{person}{Dino Pedreschi}} (Eds.). \bibinfo{publisher}{Springer Berlin
  Heidelberg}, \bibinfo{address}{Berlin, Heidelberg}, \bibinfo{pages}{39--50}.
\newblock
\showISBNx{978-3-540-30115-8}


\bibitem[\protect\citeauthoryear{Barthélemy}{Barthélemy}{2011}]%
        {Barthelmy11}
\bibfield{author}{\bibinfo{person}{Marc Barthélemy}.}
  \bibinfo{year}{2011}\natexlab{}.
\newblock \showarticletitle{Spatial networks}.
\newblock \bibinfo{journal}{\emph{Physics Reports}} \bibinfo{volume}{499},
  \bibinfo{number}{1} (\bibinfo{year}{2011}), \bibinfo{pages}{1 -- 101}.
\newblock
\showISSN{0370-1573}
\urldef\tempurl%
\url{https://doi.org/10.1016/j.physrep.2010.11.002}
\showDOI{\tempurl}


\bibitem[\protect\citeauthoryear{Bentley}{Bentley}{1975}]%
        {Bentley75}
\bibfield{author}{\bibinfo{person}{Jon~L Bentley}.}
  \bibinfo{year}{1975}\natexlab{}.
\newblock \bibinfo{booktitle}{\emph{A Survey of Techniques for Fixed Radius
  Near Neighbor Searching.}}
\newblock \bibinfo{type}{{T}echnical {R}eport}. \bibinfo{institution}{SLAC,
  SCIDOC}, \bibinfo{address}{Stanford, CA, USA}.
\newblock
\urldef\tempurl%
\url{http://slac.stanford.edu/pubs/slacreports/reports09/slac-r-186.pdf}
\showURL{%
\tempurl}
\newblock
\shownote{SLAC-R-0186, SLAC-0186.}


\bibitem[\protect\citeauthoryear{Bordes, Weston, Collobert, and Bengio}{Bordes
  et~al\mbox{.}}{2011}]%
        {Bordes11}
\bibfield{author}{\bibinfo{person}{Antoine Bordes}, \bibinfo{person}{Jason
  Weston}, \bibinfo{person}{Ronan Collobert}, {and} \bibinfo{person}{Yoshua
  Bengio}.} \bibinfo{year}{2011}\natexlab{}.
\newblock \showarticletitle{Learning Structured Embeddings of Knowledge Bases}.
  In \bibinfo{booktitle}{\emph{Proceedings of the Twenty-Fifth {AAAI}
  Conference on Artificial Intelligence, {AAAI} 2011, San Francisco,
  California, USA, August 7-11, 2011}},
  \bibfield{editor}{\bibinfo{person}{Wolfram Burgard} {and}
  \bibinfo{person}{Dan Roth}} (Eds.). \bibinfo{publisher}{{AAAI} Press},
  \bibinfo{address}{Palo Alto, California 94303}, \bibinfo{pages}{301 -- 306}.
\newblock
\urldef\tempurl%
\url{http://www.aaai.org/ocs/index.php/AAAI/AAAI11/paper/view/3659}
\showURL{%
\tempurl}


\bibitem[\protect\citeauthoryear{Catherine and Cohen}{Catherine and
  Cohen}{2016}]%
        {Catherine16}
\bibfield{author}{\bibinfo{person}{Rose Catherine} {and}
  \bibinfo{person}{William Cohen}.} \bibinfo{year}{2016}\natexlab{}.
\newblock \showarticletitle{Personalized Recommendations Using Knowledge
  Graphs: A Probabilistic Logic Programming Approach}. In
  \bibinfo{booktitle}{\emph{Proceedings of the 10th ACM Conference on
  Recommender Systems}} \emph{(\bibinfo{series}{RecSys '16})}.
  \bibinfo{publisher}{ACM}, \bibinfo{address}{New York, NY, USA},
  \bibinfo{pages}{325--332}.
\newblock
\showISBNx{978-1-4503-4035-9}
\urldef\tempurl%
\url{https://doi.org/10.1145/2959100.2959131}
\showDOI{\tempurl}


\bibitem[\protect\citeauthoryear{Chawla}{Chawla}{2010}]%
        {Chawla10}
\bibfield{author}{\bibinfo{person}{Nitesh~V. Chawla}.}
  \bibinfo{year}{2010}\natexlab{}.
\newblock \showarticletitle{Data Mining for Imbalanced Datasets: An Overview.}
\newblock In \bibinfo{booktitle}{\emph{Data Mining and Knowledge Discovery
  Handbook}}, \bibfield{editor}{\bibinfo{person}{Oded Maimon} {and}
  \bibinfo{person}{Lior Rokach}} (Eds.). \bibinfo{publisher}{Springer},
  \bibinfo{address}{Heidelberg}, \bibinfo{pages}{875--886}.
\newblock
\showISBNx{978-0-387-09822-7}
\urldef\tempurl%
\url{http://dblp.uni-trier.de/db/reference/dmkdh/dmkdh2010.html#Chawla10}
\showURL{%
\tempurl}


\bibitem[\protect\citeauthoryear{Chawla, Bowyer, Hall, and Kegelmeyer}{Chawla
  et~al\mbox{.}}{2002}]%
        {Chawla02}
\bibfield{author}{\bibinfo{person}{Nitesh~V. Chawla}, \bibinfo{person}{Kevin~W.
  Bowyer}, \bibinfo{person}{Lawrence~O. Hall}, {and} \bibinfo{person}{W.~Philip
  Kegelmeyer}.} \bibinfo{year}{2002}\natexlab{}.
\newblock \showarticletitle{SMOTE: synthetic minority over-sampling technique}.
\newblock \bibinfo{journal}{\emph{Journal of artificial intelligence research}}
   \bibinfo{volume}{16} (\bibinfo{year}{2002}), \bibinfo{pages}{321--357}.
\newblock


\bibitem[\protect\citeauthoryear{Cortes and Vapnik}{Cortes and Vapnik}{1995}]%
        {Cortes95}
\bibfield{author}{\bibinfo{person}{Corinna Cortes} {and}
  \bibinfo{person}{Vladimir Vapnik}.} \bibinfo{year}{1995}\natexlab{}.
\newblock \showarticletitle{Support-vector networks}.
\newblock \bibinfo{journal}{\emph{Machine Learning}} \bibinfo{volume}{20},
  \bibinfo{number}{3} (\bibinfo{date}{01 Sep} \bibinfo{year}{1995}),
  \bibinfo{pages}{273--297}.
\newblock
\showISSN{1573-0565}
\urldef\tempurl%
\url{https://doi.org/10.1007/BF00994018}
\showDOI{\tempurl}


\bibitem[\protect\citeauthoryear{Fan, Chang, Hsieh, Wang, and Lin}{Fan
  et~al\mbox{.}}{2008}]%
        {fan08}
\bibfield{author}{\bibinfo{person}{Rong-En Fan}, \bibinfo{person}{Kai-Wei
  Chang}, \bibinfo{person}{Cho-Jui Hsieh}, \bibinfo{person}{Xiang-Rui Wang},
  {and} \bibinfo{person}{Chih-Jen Lin}.} \bibinfo{year}{2008}\natexlab{}.
\newblock \showarticletitle{LIBLINEAR: A library for large linear
  classification}.
\newblock \bibinfo{journal}{\emph{Journal of machine learning research}}
  \bibinfo{volume}{9}, \bibinfo{number}{Aug} (\bibinfo{year}{2008}),
  \bibinfo{pages}{1871--1874}.
\newblock


\bibitem[\protect\citeauthoryear{Fry}{Fry}{1987}]%
        {Fry87}
\bibfield{author}{\bibinfo{person}{Steve Fry}.}
  \bibinfo{year}{1987}\natexlab{}.
\newblock \showarticletitle{Defining and sizing-up mountains}.
\newblock \bibinfo{journal}{\emph{Summit, Jan.--Feb}} (\bibinfo{date}{1-2}
  \bibinfo{year}{1987}), \bibinfo{pages}{16--21}.
\newblock


\bibitem[\protect\citeauthoryear{Hanika, Marx, and Stumme}{Hanika
  et~al\mbox{.}}{2019}]%
        {Hanika19}
\bibfield{author}{\bibinfo{person}{Tom Hanika}, \bibinfo{person}{Maximilian
  Marx}, {and} \bibinfo{person}{Gerd Stumme}.} \bibinfo{year}{2019}\natexlab{}.
\newblock \showarticletitle{Discovering Implicational Knowledge in Wikidata}.
  In \bibinfo{booktitle}{\emph{Formal Concept Analysis - 15th International
  Conference, 2019, Frankfurt, Germany, June 25-28, 2019, Proceedings}}
  \emph{(\bibinfo{series}{Lecture Notes in Computer Science})},
  \bibfield{editor}{\bibinfo{person}{Diana Cristea},
  \bibinfo{person}{Florence~Le Ber}, {and} \bibinfo{person}{Baris Sertkaya}}
  (Eds.), Vol.~\bibinfo{volume}{11511}. \bibinfo{publisher}{Springer},
  \bibinfo{pages}{315--323}.
\newblock
\showISBNx{978-3-030-21461-6}
\urldef\tempurl%
\url{https://doi.org/10.1007/978-3-030-21462-3\_21}
\showDOI{\tempurl}


\bibitem[\protect\citeauthoryear{He and Garcia}{He and Garcia}{2009}]%
        {He09}
\bibfield{author}{\bibinfo{person}{Haibo He} {and} \bibinfo{person}{Edwardo~A.
  Garcia}.} \bibinfo{year}{2009}\natexlab{}.
\newblock \showarticletitle{Learning from Imbalanced Data}.
\newblock \bibinfo{journal}{\emph{IEEE Trans. on Knowl. and Data Eng.}}
  \bibinfo{volume}{21}, \bibinfo{number}{9} (\bibinfo{date}{Sept.}
  \bibinfo{year}{2009}), \bibinfo{pages}{1263--1284}.
\newblock
\showISSN{1041-4347}
\urldef\tempurl%
\url{https://doi.org/10.1109/TKDE.2008.239}
\showDOI{\tempurl}


\bibitem[\protect\citeauthoryear{Helman}{Helman}{2005}]%
        {Helman05}
\bibfield{author}{\bibinfo{person}{Adam Helman}.}
  \bibinfo{year}{2005}\natexlab{}.
\newblock \bibinfo{booktitle}{\emph{The Finest Peaks-Prominence and Other
  Mountain Measures}}.
\newblock \bibinfo{publisher}{Trafford Publishing}, \bibinfo{address}{Victoria,
  British Columbia}.
\newblock
\showISBNx{9781412059954}


\bibitem[\protect\citeauthoryear{Hirsch}{Hirsch}{2005}]%
        {Hirsch05}
\bibfield{author}{\bibinfo{person}{Jorge~E Hirsch}.}
  \bibinfo{year}{2005}\natexlab{}.
\newblock \showarticletitle{An index to quantify an individual's scientific
  research output}.
\newblock \bibinfo{journal}{\emph{Proceedings of the National academy of
  Sciences}} \bibinfo{volume}{102}, \bibinfo{number}{46}
  (\bibinfo{year}{2005}), \bibinfo{pages}{16569--16572}.
\newblock


\bibitem[\protect\citeauthoryear{Jiang and Claramunt}{Jiang and
  Claramunt}{2004}]%
        {Jiang04}
\bibfield{author}{\bibinfo{person}{Bin Jiang} {and} \bibinfo{person}{Christophe
  Claramunt}.} \bibinfo{year}{2004}\natexlab{}.
\newblock \showarticletitle{Topological Analysis of Urban Street Networks}.
\newblock \bibinfo{journal}{\emph{Environment and Planning B: Planning and
  Design}} \bibinfo{volume}{31}, \bibinfo{number}{1} (\bibinfo{year}{2004}),
  \bibinfo{pages}{151--162}.
\newblock
\urldef\tempurl%
\url{https://doi.org/10.1068/b306}
\showDOI{\tempurl}
\showeprint{https://doi.org/10.1068/b306}


\bibitem[\protect\citeauthoryear{Kirmse and de~Ferranti}{Kirmse and
  de~Ferranti}{2017}]%
        {Kirmse17}
\bibfield{author}{\bibinfo{person}{Andrew Kirmse} {and}
  \bibinfo{person}{Jonathan de Ferranti}.} \bibinfo{year}{2017}\natexlab{}.
\newblock \showarticletitle{Calculating the prominence and isolation of every
  mountain in the world}.
\newblock \bibinfo{journal}{\emph{Progress in Physical Geography: Earth and
  Environment}} \bibinfo{volume}{41}, \bibinfo{number}{6}
  (\bibinfo{year}{2017}), \bibinfo{pages}{788--802}.
\newblock
\urldef\tempurl%
\url{https://doi.org/10.1177/0309133317738163}
\showDOI{\tempurl}
\showeprint{https://doi.org/10.1177/0309133317738163}


\bibitem[\protect\citeauthoryear{Kotsiantis, Kanellopoulos, Pintelas,
  et~al\mbox{.}}{Kotsiantis et~al\mbox{.}}{2006}]%
        {Kotsiantis06}
\bibfield{author}{\bibinfo{person}{Sotiris Kotsiantis},
  \bibinfo{person}{Dimitris Kanellopoulos}, \bibinfo{person}{Panayiotis
  Pintelas}, {et~al\mbox{.}}} \bibinfo{year}{2006}\natexlab{}.
\newblock \showarticletitle{Handling imbalanced datasets: A review}.
\newblock \bibinfo{journal}{\emph{GESTS International Transactions on Computer
  Science and Engineering}} \bibinfo{volume}{30}, \bibinfo{number}{1}
  (\bibinfo{year}{2006}), \bibinfo{pages}{25--36}.
\newblock


\bibitem[\protect\citeauthoryear{Lehmann, Isele, Jakob, Jentzsch, Kontokostas,
  Mendes, Hellmann, Morsey, van Kleef, Auer, and Bizer}{Lehmann
  et~al\mbox{.}}{2015}]%
        {Lehmann15}
\bibfield{author}{\bibinfo{person}{Jens Lehmann}, \bibinfo{person}{Robert
  Isele}, \bibinfo{person}{Max Jakob}, \bibinfo{person}{Anja Jentzsch},
  \bibinfo{person}{Dimitris Kontokostas}, \bibinfo{person}{Pablo~N. Mendes},
  \bibinfo{person}{Sebastian Hellmann}, \bibinfo{person}{Mohamed Morsey},
  \bibinfo{person}{Patrick van Kleef}, \bibinfo{person}{S{\"{o}}ren Auer},
  {and} \bibinfo{person}{Christian Bizer}.} \bibinfo{year}{2015}\natexlab{}.
\newblock \showarticletitle{DBpedia - {A} large-scale, multilingual knowledge
  base extracted from Wikipedia}.
\newblock \bibinfo{journal}{\emph{Semantic Web}} \bibinfo{volume}{6},
  \bibinfo{number}{2} (\bibinfo{year}{2015}), \bibinfo{pages}{167--195}.
\newblock
\urldef\tempurl%
\url{https://doi.org/10.3233/SW-140134}
\showDOI{\tempurl}


\bibitem[\protect\citeauthoryear{Nelson and McKeon}{Nelson and McKeon}{2019}]%
        {Nelson19}
\bibfield{author}{\bibinfo{person}{Garrett~Dash Nelson} {and}
  \bibinfo{person}{Ryan McKeon}.} \bibinfo{year}{2019}\natexlab{}.
\newblock \showarticletitle{Peaks of People: Using Topographic Prominence as a
  Method for Determining the Ranked Significance of Population Centers}.
\newblock \bibinfo{journal}{\emph{The Professional Geographer}}
  \bibinfo{volume}{71}, \bibinfo{number}{2} (\bibinfo{year}{2019}),
  \bibinfo{pages}{342--354}.
\newblock
\urldef\tempurl%
\url{https://doi.org/10.1080/00330124.2018.1531039}
\showDOI{\tempurl}
\showeprint{https://doi.org/10.1080/00330124.2018.1531039}


\bibitem[\protect\citeauthoryear{Oramas, Ostuni, Noia, Serra, and
  Sciascio}{Oramas et~al\mbox{.}}{2016}]%
        {Oramas16}
\bibfield{author}{\bibinfo{person}{Sergio Oramas},
  \bibinfo{person}{Vito~Claudio Ostuni}, \bibinfo{person}{Tommaso~Di Noia},
  \bibinfo{person}{Xavier Serra}, {and} \bibinfo{person}{Eugenio~Di Sciascio}.}
  \bibinfo{year}{2016}\natexlab{}.
\newblock \showarticletitle{Sound and Music Recommendation with Knowledge
  Graphs}.
\newblock \bibinfo{journal}{\emph{ACM Trans. Intell. Syst. Technol.}}
  \bibinfo{volume}{8}, \bibinfo{number}{2}, Article \bibinfo{articleno}{21}
  (\bibinfo{date}{Oct.} \bibinfo{year}{2016}), \bibinfo{numpages}{21}~pages.
\newblock
\showISSN{2157-6904}
\urldef\tempurl%
\url{https://doi.org/10.1145/2926718}
\showDOI{\tempurl}


\bibitem[\protect\citeauthoryear{Pedregosa, Varoquaux, Gramfort, Michel,
  Thirion, Grisel, Blondel, Prettenhofer, Weiss, Dubourg, Vanderplas, Passos,
  Cournapeau, Brucher, Perrot, and Duchesnay}{Pedregosa et~al\mbox{.}}{2011}]%
        {Pedregosa12}
\bibfield{author}{\bibinfo{person}{F. Pedregosa}, \bibinfo{person}{G.
  Varoquaux}, \bibinfo{person}{A. Gramfort}, \bibinfo{person}{V. Michel},
  \bibinfo{person}{B. Thirion}, \bibinfo{person}{O. Grisel},
  \bibinfo{person}{M. Blondel}, \bibinfo{person}{P. Prettenhofer},
  \bibinfo{person}{R. Weiss}, \bibinfo{person}{V. Dubourg}, \bibinfo{person}{J.
  Vanderplas}, \bibinfo{person}{A. Passos}, \bibinfo{person}{D. Cournapeau},
  \bibinfo{person}{M. Brucher}, \bibinfo{person}{M. Perrot}, {and}
  \bibinfo{person}{E. Duchesnay}.} \bibinfo{year}{2011}\natexlab{}.
\newblock \showarticletitle{Scikit-learn: Machine Learning in {P}ython}.
\newblock \bibinfo{journal}{\emph{Journal of Machine Learning Research}}
  \bibinfo{volume}{12} (\bibinfo{year}{2011}), \bibinfo{pages}{2825--2830}.
\newblock


\bibitem[\protect\citeauthoryear{Penrose}{Penrose}{2003}]%
        {Penrose2003}
\bibfield{author}{\bibinfo{person}{Mathew Penrose}.}
  \bibinfo{year}{2003}\natexlab{}.
\newblock \bibinfo{booktitle}{\emph{Random geometric graphs}}.
  \bibinfo{series}{Oxford Studies in Probability}, Vol.~\bibinfo{volume}{5}.
\newblock \bibinfo{publisher}{Oxford University Press, Oxford}. xiv+330 pages.
\newblock
\showISBNx{0-19-850626-0}
\urldef\tempurl%
\url{https://doi.org/10.1093/acprof:oso/9780198506263.001.0001}
\showDOI{\tempurl}


\bibitem[\protect\citeauthoryear{Schmidt and Stumme}{Schmidt and
  Stumme}{2018}]%
        {Schmidt18}
\bibfield{author}{\bibinfo{person}{Andreas Schmidt} {and} \bibinfo{person}{Gerd
  Stumme}.} \bibinfo{year}{2018}\natexlab{}.
\newblock \showarticletitle{Prominence and Dominance in Networks}. In
  \bibinfo{booktitle}{\emph{Knowledge Engineering and Knowledge Management}},
  \bibfield{editor}{\bibinfo{person}{Catherine Faron~Zucker},
  \bibinfo{person}{Chiara Ghidini}, \bibinfo{person}{Amedeo Napoli}, {and}
  \bibinfo{person}{Yannick Toussaint}} (Eds.). \bibinfo{publisher}{Springer
  International Publishing}, \bibinfo{address}{Cham},
  \bibinfo{pages}{370--385}.
\newblock
\showISBNx{978-3-030-03667-6}


\bibitem[\protect\citeauthoryear{Torres, Fraternali, Milani, and
  Frajberg}{Torres et~al\mbox{.}}{2018}]%
        {Torres18}
\bibfield{author}{\bibinfo{person}{Rocio~Nahime Torres}, \bibinfo{person}{Piero
  Fraternali}, \bibinfo{person}{Federico Milani}, {and} \bibinfo{person}{Darian
  Frajberg}.} \bibinfo{year}{2018}\natexlab{}.
\newblock \showarticletitle{A Deep Learning Model for Identifying Mountain
  Summits in Digital Elevation Model Data}. In \bibinfo{booktitle}{\emph{First
  {IEEE} International Conference on Artificial Intelligence and Knowledge
  Engineering, {AIKE} 2018, Laguna Hills, CA, USA, September 26-28, 2018}}.
  \bibinfo{publisher}{{IEEE} Computer Society}, \bibinfo{pages}{212--217}.
\newblock
\showISBNx{978-1-5386-9555-5}
\urldef\tempurl%
\url{https://doi.org/10.1109/AIKE.2018.00049}
\showDOI{\tempurl}


\bibitem[\protect\citeauthoryear{Vrandečić and Krötzsch}{Vrandečić and
  Krötzsch}{2014}]%
        {Kroetzsch14}
\bibfield{author}{\bibinfo{person}{Denny Vrandečić} {and}
  \bibinfo{person}{Markus Krötzsch}.} \bibinfo{year}{2014}\natexlab{}.
\newblock \showarticletitle{Wikidata: A Free Collaborative Knowledge Base}.
\newblock \bibinfo{journal}{\emph{Commun. ACM}}  \bibinfo{volume}{57}
  (\bibinfo{year}{2014}), \bibinfo{pages}{78--85}.
\newblock
\urldef\tempurl%
\url{http://cacm.acm.org/magazines/2014/10/178785-wikidata/fulltext}
\showURL{%
\tempurl}


\bibitem[\protect\citeauthoryear{Wang, Zhang, Feng, and Chen}{Wang
  et~al\mbox{.}}{2014}]%
        {Wang14}
\bibfield{author}{\bibinfo{person}{Zhen Wang}, \bibinfo{person}{Jianwen Zhang},
  \bibinfo{person}{Jianlin Feng}, {and} \bibinfo{person}{Zheng Chen}.}
  \bibinfo{year}{2014}\natexlab{}.
\newblock \showarticletitle{Knowledge Graph Embedding by Translating on
  Hyperplanes}. In \bibinfo{booktitle}{\emph{Proceedings of the Twenty-Eighth
  {AAAI} Conference on Artificial Intelligence, July 27 -31, 2014, Qu{\'{e}}bec
  City, Qu{\'{e}}bec, Canada.}}, \bibfield{editor}{\bibinfo{person}{Carla~E.
  Brodley} {and} \bibinfo{person}{Peter Stone}} (Eds.).
  \bibinfo{publisher}{{AAAI} Press}, \bibinfo{pages}{1112--1119}.
\newblock
\showISBNx{978-1-57735-661-5}
\urldef\tempurl%
\url{http://www.aaai.org/ocs/index.php/AAAI/AAAI14/paper/view/8531}
\showURL{%
\tempurl}


\end{thebibliography}

\end{document}